\newtheorem{theorem}{Theorem}
\newtheorem{proposition}{Proposition}
\newtheorem{lemma}{Lemma}
\newcommand{\commentout}[1]{}
\newcommand{\N}{\mathbb{N}}                    
\newcommand{\R}{\mathbb{R}}                    
\newcommand{\args}[1]{\mathop{\left( #1 \right)}} 
\newcommand{\inner}[1]{\mathop{\left\langle #1 \right\rangle}}
\newcommand{\norm}[1]{\mathop{\left\lVert #1 \right\rVert}}
\newcommand{\cbrace}[1]{\mathop{\left\{ #1 \right\}}}
\newcommand{\bracket}[1]{\mathop{\left[ #1 \right]}}
\newcommand{\absS}[2]{\mathop{\left\lvert #1 \right\rvert#2}} 
\newcommand{\argsS}[2]{\mathop{\left( #1 \right)#2}} 
\newcommand{\normS}[2]{\mathop{\left\lVert #1 \right\rVert#2}}
\renewcommand{\S}[1]{{\mathcal{#1}}}           
\def\vec#1{\mathchoice{\mbox{\boldmath$\displaystyle#1$}}
{\mbox{\boldmath$\textstyle#1$}}
{\mbox{\boldmath$\scriptstyle#1$}}
{\mbox{\boldmath$\scriptscriptstyle#1$}}}
\renewenvironment{cases}{%
\left\{\begin{array}{c@{\quad : \quad}l}}%
{%
\end{array}\right.}
\begin{document}

\title{Condorcet's Jury Theorem for Consensus Clustering and its Implications for Diversity}

\author{Brijnesh J.~Jain \\
 Technische Universit\"at Berlin, Germany\\
 e-mail: brijnesh.jain@gmail.com}
 
\date{}
\maketitle

\begin{abstract}
Condorcet's Jury Theorem has been invoked for ensemble classifiers to indicate that the combination of many classifiers can have better predictive performance than a single classifier. Such a theoretical underpinning is unknown for consensus clustering. This article extends Condorcet's Jury Theorem to the mean partition approach under the additional assumptions that a unique ground-truth partition exists and sample partitions are drawn from a sufficiently small ball containing the ground-truth. As an implication of practical relevance, we question the claim that the quality of consensus clustering depends on the diversity of the sample partitions. Instead, we conjecture that 
limiting the diversity of the mean partitions is necessary for controlling the quality. 
\end{abstract}

\section{Introduction}

Ensemble learning generates multiple models and combines them to a single consensus model to solve a learning problem. The assumption is that a consensus model performs better than an individual model or at least reduces the likelihood of selecting a model with inferior performance \cite{Polikar2009}. Examples of ensemble learning are classifier ensembles \cite{Dietterich2000,Kuncheva2004,Rokach2010,Zhou2012} and cluster ensembles (consensus clustering) \cite{Ghaemi2009,Strehl2002,VegaPons2011,Yang2014}. 

The assumptions on ensemble learning follow the idea of \emph{collective wisdom} that many heads are in general better than one. The idea of group intelligence applied to societies can be traced back to Aristotle and the philosophers of antiquity (see \cite{Waldron1995}) and has been recently revived by a number of publications, including James Surowiecki's book \emph{The Wisdom of Crowds} \cite{Surowiecki2005}. In his book, Surowiecki argues that not all crowds are wise, but in order to become wise, the crowd should comply to \emph{diversity} of opinion and other criteria. 

\subsubsection*{\textmd{Collective Wisdom of the Crowds.}}
One theoretical basis for \emph{collective wisdom} can be derived from Condorcet's Jury Theorem \cite{Condorcet1785}. The theorem refers to a jury of $n$ voters that need to reach a decision by majority vote. The assumptions of the simplest version of the theorem are:
(1) There are two alternatives; (2) one of both alternatives is correct; (3) voters decide independently; and (4) the probability $p$ of a correct decision is identical for every voter. If the voters are competent, that is $p > 0.5$, then Condorcet's Jury Theorem states that the probability of a correct decision by majority vote tends to one as the number $n$ of voters increases to infinity.

Condorcet's Jury Theorem has been generalized in several ways, because its assumptions are considered as rather restrictive and partly unrealistic (see e.g.~\cite{Berend1998} and references therein). Despite its practical limitations, the theorem has been used to indicate a theoretical justification of ensemble classifiers \cite{Kuncheva2004,Lam1997,Rokach2010}. In contrast to ensemble classifiers, such a theoretical underpinning is unknown for consensus clustering.

\subsubsection*{\textmd{Diversity of Opinion.}}
Though the importance of diversity for classifier ensembles has been recognized long before Surowiecki's book \cite{Krogh1995,Kuncheva2004}, some authors and scholars additionally employ Surowiecki's argument on diversity in retrospect for ensemble learning \cite{Rokach2010}. Inspired by the success of ensemble classifiers, diversity has also been promoted as a key factor for improving the quality of consensus clustering \cite{Hadjitodorov2006,Fern2008,Hu2016,Iam-On2011,Kuncheva2004b,Li2008,Pividori2016,Strehl2002,VegaPons2011,Yang2014}. However, a sound and consistent relationship between diversity and quality of consensus clustering could not yet be established. Instead, different empirical studies on the role of diversity in consensus clustering led to contradictory results \cite{Pividori2016}.

\subsubsection*{\textmd{Contribution.}}

This article addresses both problems: (1) the lack of a theoretical underpinning of consensus clustering with regard to the paradigm of collective wisdom; and (2) the lack of a consistent relationship between diversity and quality.  

With regard to the first problem, we extend Condorcet's Jury Theorem to the mean partition approach \cite{Dimitriadou2002,Domeniconi2009,Filkov2004,Franek2014,Gionis2007,Li2007,Strehl2002,Topchy2005,VegaPons2010}. Here, we consider the special case that the partition space is endowed with a metric induced by the Euclidean norm. Then the proposed theorem draws on the following assumptions: (1) there is a unique (possibly unknown) ground-truth partition $X_*$; and (2) sample partitions are drawn i.i.d.~from a sufficiently small ball containing $X_*$.

With regard to the second problem, this article analyzes the role of diversity in light of Condorcet's Jury Theorem within a special problem setting. We question the claim that the quality of consensus clustering depends on the diversity of the sample partitions generated by the cluster ensemble. Instead, we conjecture that limiting the diversity of the mean partitions is necessary to control the quality. This finding would explain the contradictory results of previous empirical studies on diversity as outlined and discussed in \cite{Pividori2016}. 

\subsubsection*{\textmd{Structure of the Paper.}}
The rest of this paper is structured as follows: Section 2 introduces background material. In Section 3 we present the extended version of Condorcet's Jury Theorem. Section 4 discusses the role of diversity and Section 5 concludes. Proofs are delegated to the appendix. 

\section{Background and Related Work}

\subsection{The Mean Partition Approach}

The goal is to group a set $\S{Z}= \cbrace{z_1, \ldots, z_m}$ of $m$ data points into $\ell$ clusters. The mean partition approach first clusters the same data set $\S{Z}$ several times using different settings and strategies of the same or different cluster algorithms. The resulting clusterings form a sample $\S{S}_n = \args{X_1, \ldots, X_n}$ of $n$ partitions $X_i \in \S{P}$ of data set $\S{Z}$. The mean partition approach aims at finding a consensus clustering that minimizes a sum-of-distances criterion from the sample partitions. In Section \ref{sec:P}, we specify the underlying partition space and in Section \ref{subsec:F} we present a formal definition of the mean partition approach.

\subsection{Context of the Mean Partition Approach}
We place the mean partition approach into the broader context of mathematical statistics. The motivation is that mathematical statistics offers a plethora of useful results, the consensus clustering literature seems to be unaware of. For example, the proof of Condorcet's Jury Theorem rests on results from statistical analysis of graphs \cite{Jain2016}. These results in turn are rooted on Fr\'echet's seminal monograph \cite{Frechet1948} and its follow-up research.  

\medskip

Since a meaningful addition of partitions is unknown, the mean partition approach emulates an averaging procedure by minimizing a sum-of-distances criterion. This idea is not new and has been studied in more general form for almost seven decades. In 1948, Fr\'echet first generalized the idea of averaging in metric spaces, where a well-defined addition is unknown. He showed that specification of a metric and a probability distribution is sufficient to define a mean element as measure of central tendency. The mean  of a sample of elements is any element that minimizes the sum of squared distances from all sample elements. Similarly, the expectation of a probability distribution minimizes an integral of the sum of squared distances from all elements of the entire space.

Since Fr\'echet's seminal work, mathematical statistics studied asymptotic and other properties of the mean element in abstract metric spaces. Examples include statistical analysis of shapes \cite{Bhattacharya2012,Dryden1998,Huckemann2010,Kendall1984}, complex objects \cite{Marron2014,Wang2007}, tree-structured data \cite{Feragen2013,Wang2007}, and graphs \cite{Ginestet2012,Jain2016}. 

The partition spaces defined in Section \ref{subsec:intrinsic-metric} can be regarded as a special case of graph spaces \cite{Jain2009,Jain2015}. Consequently, the geometric as well as statistical properties of graph spaces carry over to partition spaces. The proof of the proposed theorem rests on the orbit space framework \cite{Jain2009,Jain2015}, on the  mean partition theorem in graph spaces, and on asymptotic properties of the sample mean of graphs \cite{Jain2016} that have been adopted to partition spaces \cite{Jain2015c,Jain2016b}.

\section{Fr\'echet Functions on Partition Spaces}\label{sec:P}

This section first introduces partition spaces endowed with a metric induced by the Euclidean norm. Then we formalize the mean partition approach using Fr\'echet functions. 

Throughout this contribution, we assume that $\S{Z} = \cbrace{z_1, \ldots, z_m}$ is a set of $m$ data points to be clustered and $\S{C} = \cbrace{c_1, \ldots, c_\ell}$ is a set of $\ell$ cluster labels. 

\subsection{Partitions and their Representations}

Partitions usually occur in two forms, in a labeled and in an unlabeled form, where labeled partitions can be regarded as representations of unlabeled partitions. 

\medskip

We begin with describing labeled partitions. Let $\vec{1}_d \in \R^d$ denote the vector of all ones. Consider the set 
\[
\S{X} = \cbrace{\vec{X} \in [0,1]^{\ell \times m} \,:\, \vec{X}^T\vec{1}_\ell = \vec{1}_m},
\]
of matrices with elements from the unit interval and whose columns sum to one. A matrix $\vec{X} \in\S{X}$ represents a labeled (soft) partition of $\S{Z}$. The elements $x_{kj}$ of $\vec{X} = \args{x_{kj}}$ describe the degree of membership of data point $z_j$ to the cluster with label $c_k$. The columns $\vec{x}_{:j}$ of $\vec{X}$ summarize the membership values of the data points $z_j$ across all $\ell$ clusters. The rows $\vec{x}_{k:}$ of $\vec{X}$ represent the clusters $c_k$.

Next, we describe unlabeled partitions. Observe that the rows of a labeled partition $\vec{X}$ describe a cluster structure. Permuting the rows of $\vec{X}$ results in a labeled partition $\vec{X}'$ with the same cluster structure but with a possibly different labeling of the clusters. In cluster analysis, the particular labeling of the clusters is usually meaningless. What matters is the abstract cluster structure represented by a labeled partition. Since there is no natural labeling of the clusters, we define the corresponding unlabeled partition as the equivalence class of all labeled partitions that can be obtained from one another by relabeling the clusters. Formally, an unlabeled partition is a set of the form
\[
X = \cbrace{\vec{PX} \,:\, \vec{P} \in \Pi^\ell},
\] 
where $\Pi^\ell$ is the set of all ($\ell \times \ell$)-permutation matrices. 

In the following, we briefly call $X$ a \emph{partition} instead of unlabeled partition. In addition, any labeled partition $\vec{X}' \in X$ is called a \emph{representation} of partition $X$. By $\S{P}$ we denote the set of all (unlabeled) partitions with $\ell$ clusters over $m$ data points. Since some clusters may be empty, the set $\S{P}$ also contains partitions with less than $\ell$ clusters. Thus, we consider $\ell \leq m$ as the maximum number of clusters we encounter. 

A \emph{hard partition} $X \in \S{P}$ is a partition whose matrix representations take only binary membership values from $\cbrace{0,1}$. By $\S{P}^+ $ we denote the subset of all hard partitions. Note that the columns of representations of hard partitions are standard basis vectors from $\R^\ell$.

Though we are only interested in unlabeled partitions, we still need labeled partitions for two reasons: (1) computers can not easily and efficiently cope with unlabeled partitions unless the clusters carry labels in terms of number or names; and (2) using labeled partitions considerably simplifies derivation of theoretical results.

\subsection{Intrinsic Metric}\label{subsec:intrinsic-metric}

We endow the set $\S{P}$ of partitions with an intrinsic metric $\delta$ induced by the Euclidean norm such that $(\S{P}, \delta)$ becomes a geodesic space. The Euclidean norm for matrices $\vec{X} \in \S{X}$ is defined by
\[
\norm{\vec{X}}= \argsS{\sum_{k = 1}^\ell \sum_{j = 1}^m \absS{x_{kj}}{^2}}{^{1/2}}.
\]
The norm $\norm{\vec{X}}$ is also known as the Frobenius or Schur norm. We call $\norm{\vec{X}}$ Euclidean norm in order to emphasize the geometric properties of the partition space. The Euclidean norm induces the distance function 
\[
\delta(X, Y) = \min \cbrace{\norm{\vec{X} - \vec{Y}}\,:\, \vec{X} \in X, \vec{Y} \in Y}
\]
for all partitions $X, Y \in \S{P}$. Then the pair $\args{\S{P}, \delta}$ is a geodesic metric space \cite{Jain2015c}, Theorem 2.1. Suppose that $X$ and $Y$ are two partitions. Then 
\begin{align}\label{eq:delta<=norm}
\delta(X, Y) \leq \norm{\vec{X}-\vec{Y}}
\end{align}
for all representations $\vec{X} \in X$ and $\vec{Y} \in Y$. For some pairs of representations $\vec{X}' \in X$ and $\vec{Y}' \in Y$ equality holds in Eq.~\eqref{eq:delta<=norm}. In this case, we say that representations $\vec{X}'$ and $\vec{Y}'$ are in \emph{optimal position}. Note that pairs of representations in optimal position are not uniquely determined.

\subsection{Fr\'echet Functions}\label{subsec:F}

We first formalize the mean partition approach using Fr\'echet functions. Then we present the Mean Partition Theorem, which is of pivotal importance for gaining deeper insight into the theory of the mean partition approach \cite{Jain2016b}. Here, we apply the Mean Partition Theorem to define the concept of majority vote. In addition, the proof of the proposed theorem resorts to the properties stated in the Mean Partition Theorem.

\medskip

Let $(\S{P}, \delta)$ be a partition space endowed with the metric $\delta$ induced by the Euclidean norm. We assume that $Q$ is a probability dsitribution on $\S{P}$ with support $\S{S}_Q$.\footnote{The support of $Q$ is the smallest closed subset $\S{S}_Q \subseteq \S{P}$ such that $Q(\S{S}_Q) = 1$.} Suppose that $\S{S}_n = \args{X_1, X_2, \ldots, X_n}$ is a sample of $n$ partitions $X_i$ drawn i.i.d.~from the probability distribution $Q$. Then the Fr\'{e}chet function of $\S{S}_n$ is of the form
\begin{align*}
F_n: \S{P} \rightarrow \R, \quad Z \mapsto \frac{1}{n}\sum_{i=1}^n \delta\!\argsS{X_i, Z}{^2}.
\end{align*}
A mean partition of sample $\S{S}_n$ is any partition $M \in \S{P}$ satisfying
\[
F_n(M) = \min_{X \in \S{P}} F_n(X).
\]
Note that a mean partition needs not to be a member of the support. In addition, a mean partition exists but is not unique, in general \cite{Jain2015c}.

The Mean Partition Theorem proved in \cite{Jain2016b}  states that any representation $\vec{M}$ of a local minimum $M$ of $F_n$ is the standard mean of sample representations in optimal position with $\vec{M}$.

\begin{theorem}
Let $\S{S}_n = \args{X_1, \ldots, X_n} \in \S{P}^n$ be a sample of $n$ partitions. Suppose that $M \in \S{P}$ is a local minimum of the Fr\'echet function $F_n(Z)$ of $\S{S}_n$. Then every representation $\vec{M}$ of $M$ is of the form
\begin{align*}
\vec{M} = \frac{1}{n} \sum_{i=1}^n \vec{X}_{i},
\end{align*}
where the $\vec{X}_{i} \in X_i$ are in optimal position with $\vec{M}$.
\end{theorem}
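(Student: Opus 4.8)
The plan is to reduce the statement to a convexity argument in the representation space $\S{X}$ by lifting the Fr\'echet function along a fixed representation of $M$. Fix any representation $\vec{M} \in M$. For each $i$, since the orbit $X_i = \cbrace{\vec{P}\vec{X} : \vec{P} \in \Pi^\ell}$ is finite, the minimum defining $\delta(X_i, M)$ is attained, so I may choose a representation $\vec{X}_i \in X_i$ in optimal position with $\vec{M}$, that is $\delta(X_i, M) = \norm{\vec{X}_i - \vec{M}}$. With these choices fixed, define the lifted function
\[
\tilde{F}: \S{X} \rightarrow \R, \quad \vec{Z} \mapsto \frac{1}{n}\sum_{i=1}^n \norm{\vec{X}_i - \vec{Z}}^2.
\]
Two facts drive the argument. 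First, by Eq.~\eqref{eq:delta<=norm}, for every $\vec{Z} \in \S{X}$, writing $Z$ for the partition it represents, we have $\delta(X_i, Z) \leq \norm{\vec{X}_i - \vec{Z}}$, hence $F_n(Z) \leq \tilde{F}(\vec{Z})$. Second, because the $\vec{X}_i$ are in optimal position with $\vec{M}$, the two functions touch at $M$: $F_n(M) = \tilde{F}(\vec{M})$.

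Next I would transfer the local minimality of $M$ for $F_n$ on $\S{P}$ to local minimality of $\vec{M}$ for $\tilde{F}$ on $\S{X}$. Let $\vec{Z} \in \S{X}$ lie close to $\vec{M}$ and let $Z$ be the partition it represents. By Eq.~\eqref{eq:delta<=norm}, $\delta(M, Z) \leq \norm{\vec{M} - \vec{Z}}$, so $Z$ stays inside any prescribed neighborhood of $M$ once $\vec{Z}$ is close enough to $\vec{M}$. Combining the local minimality of $M$ with the two facts above yields the chain
\[
\tilde{F}(\vec{M}) = F_n(M) \leq F_n(Z) \leq \tilde{F}(\vec{Z}),
\]
valid for all $\vec{Z} \in \S{X}$ sufficiently near $\vec{M}$. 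Thus $\vec{M}$ is a local minimum of $\tilde{F}$ restricted to $\S{X}$.

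Finally I would exploit the structure of $\tilde{F}$. Expanding the squared Euclidean norm gives $\tilde{F}(\vec{Z}) = \norm{\vec{Z}}^2 - 2\inner{\bar{\vec{X}}, \vec{Z}} + c$ with $\bar{\vec{X}} = \frac{1}{n}\sum_{i=1}^n \vec{X}_i$ and $c$ constant, so $\tilde{F}$ is a strictly convex quadratic whose unique unconstrained minimizer is $\bar{\vec{X}}$. Since $\S{X}$ is convex (it is cut out by the linear constraints $\vec{X}^T\vec{1}_\ell = \vec{1}_m$ together with $0 \leq x_{kj} \leq 1$) and each $\vec{X}_i \in \S{X}$, the average $\bar{\vec{X}}$ again lies in $\S{X}$; hence $\bar{\vec{X}}$ is also the unique global minimizer of $\tilde{F}$ over $\S{X}$. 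Strict convexity on the convex set $\S{X}$ forces every local minimum to coincide with the global one, so $\vec{M} = \bar{\vec{X}} = \frac{1}{n}\sum_{i=1}^n \vec{X}_i$, which is the assertion, and since $\vec{M}$ was arbitrary it holds for every representation of $M$.

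The step I expect to be the main obstacle is the transfer of local minimality, i.e.\ making rigorous that perturbing the representation $\vec{M}$ within $\S{X}$ perturbs the underlying partition $M$ only within $\S{P}$. This rests on the canonical projection $\S{X} \rightarrow \S{P}$ being continuous, indeed $1$-Lipschitz, as $\delta(M, Z) \leq \norm{\vec{M} - \vec{Z}}$ already records, so that the neighborhood on which $M$ minimizes $F_n$ pulls back to a neighborhood of $\vec{M}$ on which the sandwich inequality applies. A secondary point worth checking is the existence of optimal-position representations for each $i$, which follows from finiteness of $\Pi^\ell$, and that precisely this optimal-position choice makes $F_n$ and $\tilde{F}$ touch at $M$ rather than merely satisfy $F_n \leq \tilde{F}$.
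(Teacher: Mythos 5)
Your proof is correct, and it takes a genuinely different route from the paper, which in fact contains no proof of this theorem at all: the statement is imported from \cite{Jain2016b}, and even the appendix only cites from that source the related equivalence between mean partitions and optimal multiple alignments. The route indicated by the paper thus runs through the orbit-space machinery (multiple alignments, fundamental domains, and the surjection $\S{M}(f_n) \rightarrow \S{M}(F_n)$), whereas your argument is elementary and self-contained: it needs only the inequality \eqref{eq:delta<=norm} (equivalently, that the projection $\pi: \S{X} \rightarrow \S{P}$ is $1$-Lipschitz), convexity of the set $\S{X}$, and strict convexity of the lifted quadratic $\tilde{F}$. The sandwich $F_n\args{\pi(\vec{Z})} \leq \tilde{F}(\vec{Z})$ with equality at $\vec{Z} = \vec{M}$ correctly converts local minimality of $M$ in $(\S{P},\delta)$ into local minimality of $\vec{M}$ for $\tilde{F}$ on $\S{X}$, and strict convexity on a convex set then pins $\vec{M}$ to the unique minimizer $\frac{1}{n}\sum_{i=1}^n \vec{X}_i$. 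Your argument even yields slightly more than the statement: since it applies to \emph{every} choice of optimal-position representations, any two such choices average to the same $\vec{M}$ and hence coincide, so at a local minimum each $X_i$ has a \emph{unique} representation in optimal position with $\vec{M}$. In this paper such uniqueness statements are derived only under the additional homogeneity assumption, via Dirichlet domains and Lemma \ref{lemma:unique-interior}; getting it for free at arbitrary local minima is a genuine bonus of the convexity approach.

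One step you should tighten is the existence of $\vec{X}_i \in X_i$ in optimal position with the \emph{fixed} representation $\vec{M}$. Finiteness of $\Pi^\ell$ only gives attainment of the minimum defining $\delta(X_i, M)$ by some pair $\args{\vec{X}_i^0, \vec{M}^0}$, and $\vec{M}^0$ need not equal $\vec{M}$. You must transport this pair by the group action: write $\vec{M} = \vec{P}\vec{M}^0$ with $\vec{P} \in \Pi^\ell$ and use that permutation matrices are orthogonal, so that left multiplication by $\vec{P}$ is an isometry of the Euclidean norm and $\norm{\vec{P}\vec{X}_i^0 - \vec{M}} = \norm{\vec{X}_i^0 - \vec{M}^0} = \delta(X_i, M)$; then $\vec{P}\vec{X}_i^0$ is the desired representation. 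This is routine, but it is exactly the isometry property (not mere finiteness) that makes ``optimal position with a fixed representation'' meaningful, so it deserves an explicit sentence in your write-up.
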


Condorcet's original theorem is an asymptotical statement about the majority vote. To adopt this statement, we introduce the notion of expected partition. An expected partition of probability distribution $Q$ is any partition $M_Q \in \S{P}$ that minimizes the expected Fr\'{e}chet function 
\[
F_Q: \S{P} \rightarrow \R, \quad Z \mapsto \int_{\S{P}} \delta(X, Z)^2\, dQ(X).
\]
 As for the sample Fr\'echet function $F_n$, the minimum of the expected Fr\'{e}chet function $F_Q$ exists but but is not unique, in general \cite{Jain2015c}.

\section{Condorcet's Jury Theorem}

This section extends Condorcet's Jury Theorem to the partition space defined in Section \ref{subsec:intrinsic-metric}.

\subsection{The General Setting}

Theorem \ref{theorem:CJT} extends Condorcet's Jury Theorem for hard partitions. Generalization to arbitrary partitions is out of scope and left for future research.  

The general setting of Theorem \ref{theorem:CJT} is as follows: Let $\S{S}_n = \args{X_1, \ldots, X_n}$ be a sample of $n$ hard partitions $X_i \in \S{P}^+$ drawn i.i.d.~from a probability distribution $Q$. Each of the sample partitions $X_i$ has a vote on a given data point $z \in \S{Z}$ with probability $p_i(z)$ of being correct. The goal is to reach a final decision on data point $z$ by majority vote. Theorem \ref{theorem:CJT} makes an asymptotic statement about the correctness of the majority vote given the probabilities $p_i$ . 

To formulate Theorem \ref{theorem:CJT}, we need to define the concepts of vote and majority vote.  The majority vote is based on the mean partition of a sample and is not necessarily a hard partition. Since the mean partition itself votes, we introduce votes for arbitrary (soft and hard) partitions and later restrict ourselves to samples of hard partitions when defining the majority vote. 

\medskip

\noindent
\textbf{Assumption.} In the following, we assume existence of a possibly unknown but unique hard ground-truth partition $X_* \in \S{P}^+$. By $\vec{X}_{*}$ we denote an arbitrarily selected but fixed representation of $X_*$.

\subsection{Votes}

We model the vote of a partition $X \in \S{P}$ on a given data point $z \in \S{Z}$. The vote of $X$ on $z$ has two possible outcomes: The vote is correct if $X$ agrees on $z$ with the ground-truth $X_*$, and the vote is wrong otherwise. To model the vote of a partition, we need to specify what we mean by \emph{agreeing on a data-point with the ground-truth}. 

An agreement function of representation $\vec{X}$ of $X$ is a function of the form
\[
k_{\vec{X}}: \S{Z} \rightarrow [0,1], \quad z_j \mapsto \inner{\vec{x}_{:j}, \vec{x}^*_{:j}}
\]
where $\vec{x}_{:j}$ and $\vec{x}^*_{:j}$ are the $j$-th columns of the representations $\vec{X}$ and $\vec{X}_{*}$, respectively. A column of a matrix represents the membership values of the corresponding data point across all clusters. Then the value $k_{\vec{X}}\!\args{z_j}$ measures how strongly representation $\vec{X}$ agrees with the ground-truth $\vec{X}_{*}$ on data point $z_j$. If $X$ is a hard partition, then $k_{\vec{X}}(z) = 1$ if $z$ occurs in the same cluster of $\vec{X}$ and $\vec{X}_{*}$, and $k_{\vec{X}}(z) = 0$ otherwise.

The vote of representation $\vec{X}$ of partition $X$ on data point $z$ is defined by
\[
V_{\vec{X}}(z) = \mathbb{I}\cbrace{k_{\vec{X}}(z) > 0.5},
\]
where $\mathbb{I}\cbrace{b}$ is the indicator function that gives $1$ if the boolean expression $b$ is true, and $0$ otherwise. Observe that $k_{\vec{X}} = V_{\vec{X}}$ for hard partitions $X \in \S{P}^+$. 

Based on the vote of a representation we can define the vote of a partition. The vote of partition is a Bernoulli distributed random variable. We randomly select a representation $\vec{X}$ of partition $X$ in optimal position with $\vec{X}_{*}$. Then the vote $V_X(z)$ of $X$ on data point $z$ is $V_{\vec{X}}(z)$. By
\[
p_X(z) = \mathbb{P} \args{V_X(z) = 1}.
\]
we denote the probability of a correct vote of partition $X$ on data point $z$. Note that the probability $p_X(z)$ is independent of the particular choice of representation $\vec{X}_*$ of the ground-truth partition $X_*$. 

\subsection{Majority Vote}

We assume that $\S{S}_n = \args{X_1, \ldots, X_n}$ is a sample of $n$ hard partitions $X_i \in \S{P}^+$ drawn i.i.d.~from a cluster ensemble. We define a majority vote $V_n(z)$ of sample $\S{S}_n$ on $z$ as follows: We randomly select a mean partition $M$ of $\S{S}_n$ and then set the majority vote $V_n(z)$ on $z$ to the vote $V_M(z)$ of the chosen mean partition $M$.\footnote{Recall that a mean partition is not unique in general.}

It remains to show that the vote $V_M(z)$ of any mean partition $M$ of $\S{S}_n$ is indeed a majority vote. To see this, we invoke the Mean Partition Theorem. Any representation $\vec{M}$ of mean partition $M$ is of the form 
\[
\vec{M} = \frac{1}{n} \sum_{i=1}^n \vec{X}_{i} 
\]
where $\vec{X}_{i} \in X_i$ are representations in optimal position with $\vec{M}$. For a given data point $z_j \in \S{Z}$, the mean membership values are given by 
\[
\vec{m}_{:j} = \frac{1}{n} \sum_{i=1}^n\vec{x}_{:j}^{(i)}, 
\]
where $\vec{x}_{:j}^{(i)}$ denotes the $j$-th column of representation $\vec{X}_{i}$. Since the columns of $\vec{x}_{:j}^{(i)}$ are standard basis vectors, the elements $m_{kj}$ of the $j$-th column $\vec{m}_{:j}$ contain the relative frequencies with which data point $z_j$ occurs in cluster $c_k$. Then the vote $V_{\vec{M}}(z_j)$ is correct if and only if the agreement function of $\vec{M}$ satisfies
\[
k_{\vec{M}}(z_j) = \inner{\vec{m}_{:j}, \vec{x}^*_{:j}} > 0.5.
\]
This in turn implies that there is a majority $m_{kj} > 0.5$ for some cluster $c_k$, because $X_*$ is a hard partition by assumption.

\subsection{Condorcet's Jury Theorem}

Roughly, Condorcet's Jury Theorem states that the majority vote tends to be correct when the individual voters are independent and competent. In consensus clustering, the majority vote is based on mean partitions. Individual sample partitions $X_i$ are competent on data point $z \in \S{Z}$ if the probability of a correct vote on $z$ is given by $p_i(z) > 0.5$. In the spirit of Condorcet's Jury Theorem, we want to show that the probability $\mathbb{P}(h_n(z) = 1)$ of the majority vote $h_n(z)$ tends to one with increasing sample size $n$.

In general, mean partitions are neither unique nor converge to a unique expected partition. This in turn may result in a non-convergent sequence $(h_n(z))_{n\in \N}$ of majority votes for a given data points $z$. In this case, it is not possible to establish convergence in probability to the ground-truth. To cope with this problem, we demand that the sample partitions are all contained in a sufficiently small ball, called asymmetry ball. The \emph{asymmetry ball} $\S{A}_Z$ of partition $Z \in \S{P}$ is the subset of the form
\[
\S{A}_Z = \cbrace{X \in \S{P} \,:\, \delta\!\args{X, Z} \leq \alpha_Z/4},
\]
where $\alpha_Z$ is the \emph{degree of asymmetry} of $Z$ defined by
\[
\alpha_Z = \min \cbrace{\norm{\vec{Z} - \vec{P}\vec{Z}} \,:\, \vec{Z} \in Z \text{ and } \vec{P} \in \Pi \!\setminus\! \cbrace{\vec{I}}}.
\]
A partition $Z$ is asymmetric if $\alpha_Z > 0$. If $\alpha_Z = 0$ the partition $Z$ is called symmetric. Any partition whose representations have mutually distinct rows is an asymmetric partition. Conversely, a partition is symmetric if it has a representation with at least two identical rows. We refer to \cite{Jain2016a} for more details on asymmetric partitions.

By $\S{A}_Z^\circ$ we denote the largest open subset of $\S{A}_Z$. If $Z$ is symmetric, then $\S{A}_Z^\circ = \emptyset$ be definition. Thus, a non-empty set $\S{A}_Z^\circ$ entails that $Z$ is symmetric. 

A probability distribution $Q$ is \emph{homogeneous} if there is a partition $Z$ such that the support $\S{S}_Q$ of probability distribution $Q$ is contained in the asymmetry ball $\S{A}_Z^\circ$. A sample $\S{S}_n$ is said to be homogeneous if the sample partitions of $\S{S}_n$ are drawn from a homogeneous distribution $Q$.

Now we are in the position to present Condorcet's Jury Theorem for the mean partition approach. For a proof we refer to the appendix. 
\begin{theorem}[Condorcet's Jury Theorem]\label{theorem:CJT}
Let $Q$ be a probability measure on $\S{P}^+$ with support $\S{S}_Q$. Suppose the following assumptions hold:
\begin{enumerate}
\item 
There is a partition $Z\in \S{P}$ such that $X_* \in \S{A}_Z^\circ$ and $\S{S}_Q \subseteq \S{A}_Z^\circ$. 
\item 
Hard partitions $X_1, \ldots,X_n \in \S{P}^+$ are drawn i.i.d.~according to $Q$. 
\item Let $z \in \S{Z}$. Then $p_z = p_{X}(z)$ is constant for all $X\in \S{S_Q}$.
\end{enumerate}
Then 
\begin{align}\label{eq:condorcet:01}
\lim_{n \to \infty} \mathbb{P}\! \args{V_n(z) = 1} = 
\begin{cases}
1 & p_z > 0.5\\
0 & p_z < 0.5\\
0.5 & p_z = 0.5
\end{cases}
\end{align}
for all $z \in \S{Z}$. If $p_z > 0.5$ for all $z \in \S{Z}$, then we have
\begin{align}\label{eq:condorcet:02}
\lim_{n \to \infty} \mathbb{P}\Big(\delta\!\args{M_n, X_*} = 0\Big) = 1,
\end{align}
where $\argsS{M_n}{_{n \in \N}}$ is a sequence of mean partitions. 
\end{theorem}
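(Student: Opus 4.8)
The plan is to reduce Theorem~\ref{theorem:CJT} to the classical Condorcet Jury Theorem on the real line by exploiting the small-ball geometry enforced by Assumption~1. The decisive point is that, once every sample partition \emph{and} the ground-truth lie in one common open asymmetry ball $\S{A}_Z^\circ$, all optimal alignments can be chosen coherently, so that the Fr\'echet mean collapses to an ordinary arithmetic mean of fixed representations and the agreement function of the mean partition turns into the empirical frequency of correct votes. Everything downstream is then standard probability.

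First I would fix the representation $\vec{X}_*$ of $X_*$ and, for each sample partition $X_i$, choose the representation $\vec{X}_i \in X_i$ in optimal position with $\vec{X}_*$. The geometric core is a \emph{coherence lemma}: since $\S{S}_Q \subseteq \S{A}_Z^\circ$ and $X_* \in \S{A}_Z^\circ$, the radius $\alpha_Z/4$ keeps every pairwise distance well below the threshold at which optimal positions in the orbit space cease to be unique. Consequently optimal position is unique inside the ball (which removes the randomness in the selection of voting representations) and is transitive, so that $\vec{X}_1,\ldots,\vec{X}_n$ are simultaneously in optimal position with one another and with $\vec{M}_n := \frac{1}{n}\sum_{i=1}^n \vec{X}_i$, while $\vec{M}_n$ is itself in optimal position with $\vec{X}_*$. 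I would take this transitivity of optimal position inside a sufficiently small ball from the orbit-space results cited in the text. The Mean Partition Theorem then certifies $\vec{M}_n$ as a genuine representation of a mean partition $M_n$, and this is exactly the representation that defines the vote $V_{M_n}(z)$.

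With the alignment fixed, the agreement function of the mean partition factorizes: for each data point $z_j$,
\[
k_{\vec{M}_n}\!\args{z_j} = \inner{\vec{m}_{:j}, \vec{x}^*_{:j}} = \frac{1}{n}\sum_{i=1}^n \inner{\vec{x}_{:j}^{(i)}, \vec{x}^*_{:j}} = \frac{1}{n}\sum_{i=1}^n V_{X_i}\!\args{z_j},
\]
where the last equality uses $k_{\vec{X}} = V_{\vec{X}}$ for hard partitions. By Assumptions~2 and~3 the summands $V_{X_i}(z_j)$ are i.i.d.\ Bernoulli variables with parameter $p_z$, so the event $V_n(z_j) = 1$ is precisely the event that their sample mean exceeds $0.5$. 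Equation~\eqref{eq:condorcet:01} then follows from standard limit theorems: the weak law of large numbers disposes of the cases $p_z > 0.5$ and $p_z < 0.5$, while for $p_z = 0.5$ the central limit theorem, together with the vanishing of the atom at $0.5$ (a local limit estimate of order $n^{-1/2}$, or simply symmetry for odd $n$), yields the limiting value $0.5$. For \eqref{eq:condorcet:02} I would apply this to each of the finitely many $z \in \S{Z}$: when $p_z > 0.5$ for every $z$, a union bound over the $m$ data points gives $\mathbb{P}(V_n(z)=1 \text{ for all } z)\to 1$, and it remains to identify the event ``the mean partition votes correctly on every data point'' with the event $\delta(M_n, X_*) = 0$, using that $X_*$ is hard and that, inside $\S{A}_Z^\circ$, the majority structure $k_{\vec{M}_n}(z_j) > 0.5$ for all $j$ singles out $X_*$.

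The main obstacle is the coherence lemma of the second paragraph together with this final identification. The probabilistic content is routine; the entire reduction rests on showing that the single radius $\alpha_Z/4$ forces all optimal alignments---sample-to-sample, sample-to-mean, and mean-to-ground-truth---to be mutually compatible, so that the quotient-space Fr\'echet mean behaves like a Euclidean average and voting representations are unambiguous. I expect the factor $1/4$ to be exactly what guarantees, via the triangle inequality, that every pair of partitions involved stays within the asymmetry-controlled neighborhood where optimal position is unique. The most delicate link is the passage from ``all votes correct'' to $\delta(M_n, X_*) = 0$: one must argue that in the small ball no relabeling ambiguity survives and that the correctly-voted majority structure pins down the hard partition $X_*$, which is where the interplay between the softness of $M_n$, the hardness of $X_*$, and the ball radius must be handled with care.
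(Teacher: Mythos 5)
Your treatment of Eq.~\eqref{eq:condorcet:01} is essentially the paper's own route. Parts 1--8 of the paper's proof deliver exactly what your ``coherence lemma'' asks for: a representation $\vec{Z}$ of $Z$ is put in optimal position with $\vec{X}_*$, the ball $\S{A}_{\vec{Z}}$ in $\S{X}$ maps isometrically onto $\S{A}_Z$ (\cite{Ratcliffe2006}, Theorem 13.1.1), and uniqueness of in-ball alignments follows from Dirichlet fundamental domains (Lemma~\ref{lemma:unique-interior}); the agreement $k_{\vec{M}}(z)$ then becomes the empirical frequency of correct votes, and the paper closes with the classical jury theorem (\cite{Grofman1983}, Theorem 1) where you use the law of large numbers and a CLT/symmetry argument --- an immaterial difference. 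One attribution error: the Mean Partition Theorem alone does not ``certify $\vec{M}_n$ as a genuine representation of a mean partition''; it is only a necessary condition on representations of minimizers of $F_n$. The converse direction you need is supplied in the paper by first showing (via the isometry) that the in-ball alignment $\mathfrak{X}$ is an \emph{optimal multiple alignment}, then invoking the surjectivity of $\mathfrak{X} \mapsto \pi\args{\vec{M}_{\mathfrak{X}}}$ onto the mean partition set (\cite{Jain2016b}, Theorem 4.1), together with uniqueness of the mean partition under homogeneity (\cite{Jain2016a}, Theorem 3.1).

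The genuine gap is your argument for Eq.~\eqref{eq:condorcet:02}, precisely at the step you flagged as delicate. The identification of the event ``$V_n(z)=1$ for all $z$'' with the event ``$\delta\args{M_n, X_*}=0$'' is false, and no small-ball or relabeling argument can repair it, because the obstruction is not label ambiguity but the softness of the mean: $\delta\args{M_n,X_*}=0$ means $M_n = X_*$, and since $\vec{M}_n = \frac{1}{n}\sum_{i=1}^n \vec{X}_i$ is an average of $\cbrace{0,1}$-matrices, it can equal a hard matrix only if \emph{every} $\vec{X}_i$ equals that matrix. Hence $\mathbb{P}\args{\delta\args{M_n,X_*}=0} = Q\args{\cbrace{X_*}}^n \to 0$ whenever $Q$ is not a point mass at $X_*$, whereas a correct majority vote at every data point only requires each column of $\vec{M}_n$ to put mass greater than $0.5$ on the correct cluster. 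So the union-bound route proves a statement about votes, not about the distance. The paper proves Eq.~\eqref{eq:condorcet:02} by a different, asymptotic mechanism (Part 9): homogeneity gives a unique expected partition $M_Q$ (\cite{Jain2016a}, Theorem 3.1), strong consistency of Fr\'echet sample means gives $M_n \to M_Q$ almost surely (\cite{Jain2015c}, Theorems 3.1 and 3.3), and Parts 1--8 are then used to identify the single limit object $M_Q$ with $X_*$; note also that the text following the theorem reads Eq.~\eqref{eq:condorcet:02} as almost-sure convergence, not as the finite-$n$ probability statement you tried to establish directly. Your instinct about the soft/hard interplay is in fact sharper than the paper's own wording here: the limit object satisfies $k_{\vec{M}_Q}(z) = p_z$, so ``agrees with $X_*$'' in Part 9 is again a statement about votes rather than equality --- but the repair available in the cited literature is the consistency argument, not the union bound.
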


\medskip

Equation \eqref{eq:condorcet:01} corresponds to Condorcet's original theorem for majority vote on a single data point and Eq.~\eqref{eq:condorcet:02} shows that the sequence of mean partitions converges almost surely to the ground-truth partition. Observe that almost sure convergence in Eq.~\eqref{eq:condorcet:02} also holds when the probabilities $p_z$ differ for different data points $z \in \S{Z}$. From the proof of Condorcet's Jury Theorem follows that the ground-truth partition $X_*$ is an expected partition almost surely and therefore takes the form as described in the Expected Partition Theorem \cite{Jain2016b}. 

\section{The Role of Diversity}

This section discusses the role of diversity under the assumptions of a special setting. We question the prevailing claim that quality of consensus clustering depends on the diversity of the sample partitions. Instead, we conjecture that limiting the diversity of the mean partition set is necessary to control the quality.

\subsection{Diversity and Quality}

This section specifies the notion of diversity and quality for the subsequent analysis. 

\subsubsection*{Diversity}
Suppose that $\S{S}$ is either a sample or a set consisting of $n$ elements $X_1, \ldots, X_n$. There are two common approaches to measure the diversity of $\S{S}$. Both approaches assume a dissimilarity function $\Delta: \S{P} \times \S{P} \rightarrow \R$. Typical choices for $\Delta$ are based on the adjusted Rand index, the Jaccard index, and the Normalized Mutual Information. The first approach averages the pairwise dissimilarities
\[
G(\S{S}) = \frac{1}{n^2} \sum_{i=1}^n \sum_{j=1}^n \Delta\!\args{X_i, X_j}
\]
and the second approach is the variation 
\[
F(M) = \frac{1}{n} \sum_{i=1}^n \Delta\!\args{X_i, M},
\]
where $M$ is either a medoid or mean partition of $\S{S}$. A simple calculation shows that both diversity measures are closely related by the inequality
\[
F(M) \leq \frac{1}{n} \sum_{i=1}^n F\!\args{X_i} = G(\S{S}).
\]
For the following analysis, we assume that $\Delta= \delta^2$. In this case, the diversity of a homogeneous sample (set) $\S{S}$ is bounded by 
\[
F(M) \leq G(\S{S}) \leq \frac{\alpha_Z}{4},
\]
where $\alpha_Z$ is the degree of asymmetry of a partition $Z$ whose asymmetry ball $\S{A}_Z$ includes $\S{S}$. We say, a sample (set) is diverse if it is not homogeneous. 

\subsubsection*{Quality}
Let $X_*$ be the ground-truth partition to be predicted as good as possible. For this, we need to specify the quality of a prediction, that is what we exactly mean by the term "as good as possible". We introduce the loss $L(X) = \delta\!\args{X, X_*}$ of a partition $X$ as a measure of how well $X$ predicts the ground-truth $X_*$.

To measure the quality of consensus clustering, we assume that $\S{S} \subseteq \S{P}$ is a sample or a closed set. Then 
\[
L^*(\S{S}) = \max \cbrace{L(X) \,:\, X \in \S{F}}
\]
is the worst-case loss of $\S{S}$. Consensus clustering generates a sample $\S{S}_n$ of sample partitions. The sample $\S{S}_n$ determines the non-empty and finite set $\S{F}_n$ of mean partitions. A mean algorithm picks a mean partition $M$ from $\S{F}_n$ to predict the ground-truth $X_*$. We define the quality of consensus clustering by the worst-case loss $L^*\!\args{\S{F}_n}$. We consider the worst-case loss rather than the average or expected loss to keep the subsequent analysis simple.

\subsection{The Role of Diversity}
Diversity of the sample partitions is a basic prerequisite for consensus clustering. Without diversity, consensus clustering reproduces the same partition. In this case, the resulting mean partition is merely another replication. 

Research on diversity in consensus clustering assumes that the quality of consensus clustering depends on the diversity of the sample. However, there is a significant gap in argument between the fact that diversity is a basic prerequisite to make sense of consensus clustering and the assumption that quality depends on diversity. The hope is that this gap can be bridged by an appropriate but currently unknown diversity measure. The proof of Theorem \ref{theorem:CJT} rejects this hope. 

Theorem \ref{theorem:CJT} includes the assumptions of Condorcet's original theorem and adds two further assumptions: (1) existence of a unique ground-truth partition, and (2) existence of a sufficiently small ball that contains the ground-truth partition as well as any sample partition. 

The first assumption transfers the notion of correct decision in Condorcet's original formulation to the context of consensus clustering. The second assumption demands homogeneity of the probability measure. At first glance, homogeneity seems to contradict the assumption that diversity of the sample is a decisive factor for improving the quality of consensus clustering. 

A closer look at the proof of Theorem \ref{theorem:CJT} shows that neither homogeneity nor diversity of the sample matters. What matters is that the mean partition is a consistent estimator of the ground-truth. Homogeneous probability distributions merely form one among several other classes of distributions for which consistency of the mean partition to a unique expected partition is ensured. Diversity of the sample can entail both, the desired consistency properties as well as diversity of the mean and expected partitions. In the latter case, Theorem \ref{theorem:CJT} fails and the mean partition returned by the consensus clustering method may have nothing in common with the ground-truth.

The claim is that diversity of the sample partitions is neither necessary nor sufficient for finding mean partitions close to the ground-truth, whereas limiting the diversity of the mean partition set is necessary but not sufficient to control the quality of consensus clustering.

\subsection{Loss Decomposition}

Inspired by the bias-variance decomposition in supervised learning, we discuss the different sources of error in consensus clustering to understand how we can improve its quality. For this, we call a cluster ensemble stable (unstable) if its mean partition set is homogeneous (diverse). Suppose that $\S{F}_n \subseteq \S{P}$ is a mean partition set. Then 
\[
L_*\!\args{\S{F}_n} = \min \cbrace{L(X) \,:\, X \in \S{F}_n}
\]
is the approximation error  of $\S{F}$. The approximation error tells us how close the best solution we can attain is to the ground-truth.  
Then the worst-case loss $L^*\!\args{\S{F}_n}$ of a mean partition set $\S{F}_n$ can be expressed as
\[
L^*\!\args{\S{F}_n} = \underbrace{\Big(L^*\!\args{\S{F}_n} \,-\, L_*\!\args{\S{F}_n}\Big)}_{\text{estimation error}} \quad +  \underbrace{\phantom{\Big(}L_*\!\args{\S{F}_n}\phantom{\Big)}}_{\text{approximation error}}\hspace{-1.5em}.
\]
The approximation error is caused by the choice and design of the underlying clustering ensemble method. The estimation error is caused by the random process of generating partitions. This error measures the difference between the loss in the worst and best case. If the set $\S{F}_n$ of mean partitions is homogeneous (diverse), then the estimation error is small (large).

For homogeneous distributions, the set $\S{F}_n$ is a singleton. Therefore, the estimation error is always zero, but the approximation error is likely to be large, when prior knowledge about the data is not considered. More generally, stable approaches guarantee a small estimation error and thereby impose a strong bias about the kind of cluster structure the underlying ensemble method is looking for. If the assumption on the cluster structure does not match the ground-truth, the approximation error will be large. The usual way to improve stable approaches is to incorporate competence by means of prior knowledge and domain expertise in order to hopefully reduce the approximation error. Under the special assumptions of Theorem \ref{theorem:CJT}, the estimation error is zero and the worst-case loss $L^*\!\args{M_n}$ converges almost surely to zero as the sample size $n$ tends to infinity. 

In contrast, unstable approaches result in large estimation errors, but may more likely have a lower approximation error than stable approaches. Such approaches are less biased about the kind of cluster structure the underlying ensemble method is looking for. Unstable  approaches shift the problem of assuming a certain kind of cluster structure in the data to the problem of identifying a mean partition close to the ground-truth. In other words, the bias on the kind of cluster structure is shifted to a bias on the choice of mean partition. The problem is that existing mean algorithms are unable to cope with the latter bias, because the Fr\'echet function contains no information about the ground-truth. Consequently, a mean algorithm regards the diverse sample means as equivalent solutions of the same minimization problem. Thus, obtaining a mean partition with low loss is merely due to chance. 

Another issue is that unstable approaches are prone to the Texas sharpshooter fallacy: Suppose a data set $\S{Z}$ is given for which the ground truth is known. Repeatedly apply a consensus clustering method to $\S{Z}$. In each trial, consensus clustering generates sample partitions and returns a mean partition. The mean partitions of the different trials can differ substantially. In this case, select the top $N$ mean partitions closest to the known ground-truth and claim that diversity improves quality. 

Finally, it is unclear how to improve the quality of unstable approaches without turning them into stable versions. A possible solution could be to derive the mean partitions of the second order, that is the mean partitions of the mean partition set. If the mean partition set is diverse, it can happen that mean partition sets of higher order remain diverse. Empirical results could shed light on this issue but require enumerating a subset of mean partitions, which is computationally expensive. Also incorporating prior knowledge will not improve the quality of unstable approaches, because the mean partitions are diverse. 

Though we analyzed the relationship between diversity and quality in a special setting, we assume that the main findings carry over to other consensus clustering approaches as well as other quality and diversity measures, unless they address the key issue of non-uniqueness of the mean and expected partition. We therefore conjecture that improved results obtained by unstable approaches are due to chance and not any systematic factor such as diversity of the sample partitions. To improve quality, we advocate to limit the diversity of the mean partitions and to incorporate prior knowledge.

\section{Conclusion}
This contribution extends Condorcet's Jury Theorem to partition spaces endowed with a metric induced by the Euclidean norm under the following additional assumptions: (i) existence of a hard ground-truth partition, and (ii) all sample partitions and the ground-truth are contained in some asymmetry ball. In light of the proposed theorem and its assumptions, we show that the quality of consensus clustering is independent of the diversity of the sample partitions. To improve quality, we advocate to limit diversity of the mean partitions to keep the estimation error low and incorporate prior knowledge to reduce the approximation error. We regard this finding as helpful for  devising better consensus clustering algorithms. Future research aims at extending the proposed theorem by relaxing assumptions (i) and (ii) and at empirically investigating the conjecture on diversity of the mean partitions.

\bigskip

\noindent
\textbf{Acknowledgment.}\small 
~B.~Jain was funded by the DFG Sachbeihilfe \texttt{JA 2109/4-1}.

\begin{appendix}

\small

\section{Proof of Theorem \ref{theorem:CJT}}

To prove Theorem \ref{theorem:CJT}, it is helpful to use a suitable representation of partitions. We suggest to represent partitions as points of some geometric space, called orbit space \cite{Jain2015c}. Orbit spaces are well explored, possess a rich geometrical structure and have a natural connection to Euclidean spaces \cite{Bredon1972,Jain2015,Ratcliffe2006}. 

\subsection{Partition Spaces}
We denote the natural projection that sends matrices to the partitions they represent by
\[
\pi: \S{X} \rightarrow \S{P}, \quad \vec{X} \mapsto \pi(\vec{X}) = X.
\]
The group $\Pi = \Pi^\ell$ of all ($\ell \times \ell$)-of all ($\ell \times \ell$)-permutation matrices is a discontinuous group that acts on $\S{X}$ by matrix multiplication, that is
\[
\cdot : \Pi \times \S{X} \rightarrow \S{X}, \quad (\vec{P}, \vec{X}) \mapsto \vec{PX}.
\]
The orbit of $\vec{X} \in \S{X}$ is the set $\bracket{\vec{X}} = \cbrace{\vec{PX} \,:\, \vec{P} \in \Pi}$. The orbit space of partitions is the quotient space $\S{X}/\Pi = \cbrace{\bracket{\vec{X}} \,:\, \vec{X} \in \S{X}}$ obtained by the action of the permutation group $\Pi$ on the set $\S{X}$. We write $\S{P} = \S{X}/\Pi$ to denote the partition space and $X \in \S{P}$ to denote an orbit $[\vec{X}] \in \S{X}/\Pi$. The natural projection $\pi: \S{X} \rightarrow \S{P}$ sends matrices $\vec{X}$ to the partitions $\pi(\vec{X}) = \bracket{\vec{X}}$ they represent. The partition space $\S{P}$ is endowed with the intrinsic metric $\delta$ defined by $\delta(X, Y) = \min \cbrace{\norm{\vec{X} - \vec{Y}} \,:\, \vec{X} \in X, \vec{Y} \in Y}$.

\subsection{Dirichlet Fundamental Domains}

We use the following notations: By $\overline{\S{U}}$ we denote the closure of a subset $\S{U} \subseteq \S{X}$, by $\partial \S{U}$ the boundary of $\S{U}$, and by $\S{U}^\circ$ the open subset $\overline{\S{U}} \setminus \partial \S{U}$. The action of permutation $\vec{P} \in \Pi$ on the subset $\S{U}\subseteq \S{X}$ is the set defined by $\vec{P}\,\S{U} = \cbrace{\vec{PX} \, :\, \vec{X} \in \S{U}}$. By $\Pi^* = \Pi \setminus \cbrace{\vec{I}}$ we denote the subset of ($\ell \times \ell$)-permutation matrices without identity matrix $\vec{I}$. 

\medskip

A subset $\S{F}$ of $\S{X}$ is a fundamental set for $\Pi$ if and only if $\S{F}$ contains exactly one representation $\vec{X}$ from each orbit $\bracket{\vec{X}} \in \S{X}/\Pi$. 
A fundamental domain of $\Pi$ in $\S{X}$ is a closed connected set $\S{F} \subseteq \S{X}$ that satisfies 
\begin{enumerate}
\item $\displaystyle\S{X} = \bigcup_{\vec{P} \in \Pi} \vec{P}\S{F}$
\item $\vec{P} \S{F}^\circ \cap \S{F}^\circ = \emptyset$ for all $\vec{P} \in \Pi^*$.
\end{enumerate}

\begin{proposition}
Let $\vec{Z}$ be a representation of an asymmetric partition $Z \in \S{P}$. Then 
\[
\S{D}_{\vec{Z}} = \cbrace{\vec{X} \in \S{X} \,:\, \norm{\vec{X} - \vec{Z}} \leq \norm{\vec{X} - \vec{PZ}} \text{ for all }\vec{P} \in \Pi}
\]
is a fundamental domain, called Dirichlet fundamental domain of $\vec{Z}$. 
\end{proposition}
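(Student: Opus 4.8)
The plan is to verify the three defining conditions of a fundamental domain — that $\S{D}_{\vec{Z}}$ is closed and connected, that it satisfies the covering condition (1), and that it satisfies the interior-disjointness condition (2) — working directly from the Dirichlet construction. The only structural facts I would use are that every $\vec{P} \in \Pi$ acts as an isometry of the Euclidean norm (it merely permutes rows, hence $\norm{\vec{Z}} = \norm{\vec{PZ}}$), that $\S{X}$ is convex, compact, and $\Pi$-invariant, and that asymmetry of $Z$ makes the orbit points $\vec{PZ}$ pairwise distinct.

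First I would put each defining inequality into linear form. Expanding the squared norms and cancelling $\norm{\vec{Z}}^2 = \norm{\vec{PZ}}^2$ shows that $\norm{\vec{X} - \vec{Z}} \le \norm{\vec{X} - \vec{PZ}}$ is equivalent to the affine condition $\inner{\vec{X}, \vec{Z} - \vec{PZ}} \ge 0$. Thus $\S{D}_{\vec{Z}}$ is the intersection of $\S{X}$ with one closed half-space for each $\vec{P} \in \Pi^*$. Since $\S{X}$ is convex, closed, and bounded, $\S{D}_{\vec{Z}}$ is a compact convex set, hence closed and connected. For the covering $\S{X} = \bigcup_{\vec{P} \in \Pi} \vec{P}\S{D}_{\vec{Z}}$, take $\vec{X} \in \S{X}$; the orbit $\bracket{\vec{Z}}$ is finite, so $\norm{\vec{X} - \vec{PZ}}$ attains its minimum over $\vec{P} \in \Pi$ at some $\vec{P}_0$. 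Setting $\vec{Y} = \vec{P}_0^{-1}\vec{X} \in \S{X}$, the isometry property gives $\norm{\vec{Y} - \vec{QZ}} = \norm{\vec{X} - \vec{P}_0\vec{Q}\vec{Z}}$ for all $\vec{Q} \in \Pi$, and minimality of $\vec{P}_0$ yields $\norm{\vec{Y} - \vec{Z}} \le \norm{\vec{Y} - \vec{QZ}}$, so $\vec{Y} \in \S{D}_{\vec{Z}}$ and $\vec{X} \in \vec{P}_0\S{D}_{\vec{Z}}$. This step uses no asymmetry.

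The crux is condition (2), and it rests on characterizing the interior. I would show that any $\vec{X} \in \S{D}_{\vec{Z}}$ at which some inequality is tight, say $\norm{\vec{X} - \vec{Z}} = \norm{\vec{X} - \vec{PZ}}$ with $\vec{P} \in \Pi^*$, lies in $\partial\S{D}_{\vec{Z}}$, so that interior points satisfy every inequality strictly. Moving along the segment $\vec{X}_t = (1-t)\vec{X} + t\,\vec{PZ}$, which stays in $\S{X}$ by convexity and $\Pi$-invariance, the linear form above gives that $\norm{\vec{X}_t - \vec{Z}}^2 - \norm{\vec{X}_t - \vec{PZ}}^2 = 2\inner{\vec{X}_t, \vec{PZ} - \vec{Z}}$ is affine in $t$, equal to $0$ at $t=0$ and with slope $\norm{\vec{PZ} - \vec{Z}}^2$, which is strictly positive exactly because $Z$ is asymmetric. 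Hence $\vec{X}_t \notin \S{D}_{\vec{Z}}$ for small $t > 0$, so $\vec{X}$ is not interior. With the interior characterized by strict inequalities, I would conclude by supposing $\vec{W} \in \S{D}_{\vec{Z}}^\circ \cap \vec{P}\,\S{D}_{\vec{Z}}^\circ$ for some $\vec{P} \in \Pi^*$: the strict inequality for $\vec{W}$ at $\vec{Q} = \vec{P}$ gives $\norm{\vec{W} - \vec{Z}} < \norm{\vec{W} - \vec{PZ}}$, while applying the isometry $\vec{P}$ to the strict inequality for $\vec{P}^{-1}\vec{W} \in \S{D}_{\vec{Z}}^\circ$ at $\vec{Q} = \vec{P}^{-1}$ yields the reverse inequality, a contradiction.

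The main obstacle I anticipate is precisely this interior characterization, because the interior must be taken relative to $\S{X}$, which is lower-dimensional in $\R^{\ell \times m}$ and carries its own boundary from the constraints $x_{kj} \in [0,1]$ and $\vec{X}^{T}\vec{1}_\ell = \vec{1}_m$. The perturbation $\vec{X}_t$ must therefore remain inside $\S{X}$, which is guaranteed by convexity together with $\vec{PZ} \in \S{X}$, and this is the single place where asymmetry is indispensable: were $\vec{PZ} = \vec{Z}$ permitted, the slope $\norm{\vec{PZ} - \vec{Z}}^2$ would vanish, a tight constraint could fail to cut a boundary point off from $\S{D}_{\vec{Z}}$, and the tiling property (2) would break down.
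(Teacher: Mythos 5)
Your proof is correct, but it takes a genuinely different route from the paper: the paper does not argue the proposition at all, it simply cites Theorem 6.6.13 of Ratcliffe's \emph{Foundations of Hyperbolic Manifolds}, the general result that a Dirichlet domain centered at a point with trivial stabilizer, for a group acting discontinuously by isometries, is a fundamental domain. Your argument replaces this appeal to general theory with a self-contained convexity computation specific to the Euclidean setting: the half-space reformulation $\inner{\vec{X}, \vec{Z} - \vec{PZ}} \geq 0$ exhibits $\S{D}_{\vec{Z}}$ as a compact convex (hence closed and connected) set, finiteness of the orbit gives the covering condition without any asymmetry hypothesis, and the segment computation with slope $\norm{\vec{PZ} - \vec{Z}}^2 > 0$ isolates exactly where asymmetry is needed; the final contradiction via the isometry $\vec{P}^{-1}$ is clean, and the degenerate case $\vec{X} = \vec{PZ}$ is automatically excluded since tightness there would force $\vec{PZ} = \vec{Z}$. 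What each approach buys: the paper's citation is shorter and inherits a result valid in far greater generality (infinite discontinuous isometry groups on proper metric spaces), but it leaves implicit the verification that the hypotheses hold here — that $\Pi$ acts discontinuously by isometries on $\S{X}$ and that asymmetry of $Z$ is precisely triviality of the stabilizer of $\vec{Z}$ — and it glosses over the relative-interior subtlety you rightly flag, namely that interiors must be taken relative to $\S{X}$ (which has empty interior in $\R^{\ell \times m}$) for condition (2) to be non-vacuous; your proof checks all of this explicitly, makes the role of asymmetry transparent, and yields convexity and compactness of $\S{D}_{\vec{Z}}$ as a by-product, at the cost of being tied to the linear structure of $\S{X}$ and the finiteness of $\Pi$.
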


\begin{proof}
 \cite{Ratcliffe2006}, Theorem 6.6.13. 
\end{proof}

\begin{lemma}\label{lemma:unique-interior}
Let $\S{D}_{\vec{Z}}$ be a Dirichlet fundamental domain of representation $\vec{Z}$ of an asymmetric partition $Z \in \S{P}$. Suppose that $\vec{X}$ and $\vec{X}'$ are two different representations of a partition $X$ such that $\vec{X}, \vec{X}' \in \S{D}_{\vec{Z}}$. Then $\vec{X}, \vec{X}' \in \partial \S{D}_{\vec{Z}}$.
\end{lemma}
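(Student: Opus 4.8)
The plan is to argue by contradiction. Since $\vec{X}$ and $\vec{X}'$ are two \emph{different} representations of the same partition $X$, there is a permutation $\vec{P} \in \Pi^*$ with $\vec{X}' = \vec{P}\vec{X}$. Suppose, contrary to the claim, that at least one of them---say $\vec{X}$---does not lie on the boundary. Because $\S{D}_{\vec{Z}}$ is closed we have $\S{D}_{\vec{Z}}^\circ = \S{D}_{\vec{Z}} \setminus \partial\S{D}_{\vec{Z}}$, so this forces $\vec{X} \in \S{D}_{\vec{Z}}^\circ$. I will show that then $\vec{X}' \notin \S{D}_{\vec{Z}}$, contradicting the hypothesis $\vec{X}' \in \S{D}_{\vec{Z}}$. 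Running the same argument with the roles of $\vec{X}$ and $\vec{X}'$ interchanged (and $\vec{P}^{-1}$ in place of $\vec{P}$) then rules out $\vec{X}' \in \S{D}_{\vec{Z}}^\circ$ as well, so both representations must lie on $\partial\S{D}_{\vec{Z}}$.

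The first ingredient is a description of the interior. Since $Z$ is asymmetric, the orbit points $\vec{Q}\vec{Z}$ with $\vec{Q} \in \Pi$ are pairwise distinct, so $\S{D}_{\vec{Z}}$ is a Dirichlet (Voronoi-type) cell whose interior is exactly the set of points at which all defining inequalities hold strictly; that is, $\vec{X} \in \S{D}_{\vec{Z}}^\circ$ iff $\norm{\vec{X} - \vec{Z}} < \norm{\vec{X} - \vec{Q}\vec{Z}}$ for every $\vec{Q} \in \Pi^*$. The only point that needs checking here is that a point of $\S{D}_{\vec{Z}}$ at which some inequality holds with equality is a boundary point: moving an $\eps$-step from such a point toward the corresponding $\vec{Q}\vec{Z}$ strictly decreases $\norm{\cdot - \vec{Q}\vec{Z}}$ and increases $\norm{\cdot - \vec{Z}}$, hence leaves $\S{D}_{\vec{Z}}$, so every neighbourhood meets the complement.

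The second ingredient is that permutation matrices are orthogonal and therefore preserve the Euclidean (Frobenius) norm, so $\norm{\vec{P}\vec{A}} = \norm{\vec{A}}$ for every matrix $\vec{A}$. Applying this to $\vec{X}' = \vec{P}\vec{X}$ yields $\norm{\vec{X}' - \vec{P}\vec{Z}} = \norm{\vec{P}(\vec{X} - \vec{Z})} = \norm{\vec{X} - \vec{Z}}$ and $\norm{\vec{X}' - \vec{Z}} = \norm{\vec{P}(\vec{X} - \vec{P}^{-1}\vec{Z})} = \norm{\vec{X} - \vec{P}^{-1}\vec{Z}}$. Since $\vec{P}^{-1} \in \Pi^*$, the strict interior inequality with $\vec{Q} = \vec{P}^{-1}$ gives $\norm{\vec{X} - \vec{Z}} < \norm{\vec{X} - \vec{P}^{-1}\vec{Z}}$, and combining the three relations produces $\norm{\vec{X}' - \vec{P}\vec{Z}} < \norm{\vec{X}' - \vec{Z}}$. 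Thus $\vec{X}'$ violates the defining inequality of $\S{D}_{\vec{Z}}$ for $\vec{Q} = \vec{P}$, so $\vec{X}' \notin \S{D}_{\vec{Z}}$, the desired contradiction.

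I expect the main obstacle to be the clean identification of the topological interior $\S{D}_{\vec{Z}}^\circ$ with the locus of strict inequalities; this is where asymmetry of $Z$ is essential, since it guarantees the orbit points are distinct so that the cell is full-dimensional and its equality loci are genuine faces. By contrast, the norm computation is routine once orthogonality of permutation matrices is invoked. I note finally that appealing to property~(2) of a fundamental domain alone would give merely that $\vec{X}$ and $\vec{X}'$ cannot \emph{both} lie in $\S{D}_{\vec{Z}}^\circ$, which is weaker than the asserted conclusion; the explicit computation above is what upgrades this to the statement that \emph{neither} lies in the interior.
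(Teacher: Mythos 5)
Your argument is self-contained, which already distinguishes it from the paper: the paper does not actually prove this lemma but delegates it entirely to external references (\cite{Jain2015}, Prop.~3.13 and \cite{Jain2016a}, Prop.~A.2). Your mechanism is the right one: write $\vec{X}' = \vec{P}\vec{X}$ with $\vec{P} \in \Pi^*$, use orthogonality of permutation matrices to transport the strict interior inequality at $\vec{X}$ (with $\vec{Q} = \vec{P}^{-1}$) into a strictly violated defining inequality at $\vec{X}'$ (with $\vec{Q} = \vec{P}$), and then symmetrize. Your closing remark is also correct and worth keeping: property~(2) of fundamental domains only forbids both representations from being interior, whereas your computation shows the stronger fact that if either is interior, the other leaves $\S{D}_{\vec{Z}}$ entirely.

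There is, however, one incorrectly justified step, located exactly where you flag the main obstacle. To show that a point of $\S{D}_{\vec{Z}}$ with $\norm{\vec{X}-\vec{Z}} = \norm{\vec{X}-\vec{Q}\vec{Z}}$ is a boundary point, you move toward $\vec{Q}\vec{Z}$ and assert this ``strictly decreases $\norm{\cdot - \vec{Q}\vec{Z}}$ and increases $\norm{\cdot - \vec{Z}}$.'' The second assertion is false in general: from a bisector point far from both $\vec{Z}$ and $\vec{Q}\vec{Z}$, moving toward $\vec{Q}\vec{Z}$ decreases \emph{both} distances (in the plane, take $\vec{Z}=(0,0)$, $\vec{Q}\vec{Z}=(2,0)$, $\vec{X}=(1,10)$; the distance to $\vec{Z}$ has derivative $-198<0$ along that segment). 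The conclusion nevertheless survives, because what matters is the \emph{difference} of squared distances,
\[
f(\vec{Y}) = \normS{\vec{Y}-\vec{Z}}{^2} - \normS{\vec{Y}-\vec{Q}\vec{Z}}{^2} = 2\inner{\vec{Y},\, \vec{Q}\vec{Z}-\vec{Z}},
\]
where the second equality uses $\norm{\vec{Q}\vec{Z}} = \norm{\vec{Z}}$. This function is affine, so along $\vec{X}_t = (1-t)\vec{X} + t\,\vec{Q}\vec{Z}$ it evaluates to $f(\vec{X}_t) = (1-t)f(\vec{X}) + t f(\vec{Q}\vec{Z}) = t\,\normS{\vec{Q}\vec{Z}-\vec{Z}}{^2}$, which is strictly positive for $t>0$ since asymmetry of $Z$ gives $\vec{Q}\vec{Z} \neq \vec{Z}$. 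Hence $\vec{X}_t \notin \S{D}_{\vec{Z}}$ while $\vec{X}_t \rightarrow \vec{X}$, so $\vec{X} \in \partial\S{D}_{\vec{Z}}$. A second point you leave implicit, but which your choice of direction happens to handle: interior and boundary must be read relative to $\S{X}$ (the affine constraint $\vec{X}^T\vec{1}_\ell = \vec{1}_m$ makes the ambient interior in $\R^{\ell \times m}$ empty, which would render the lemma vacuous), so the perturbation must stay inside $\S{X}$; it does, because $\vec{Q}\vec{Z} \in \S{X}$ and $\S{X}$ is convex, whereas a generic perturbation orthogonal to the bisector could exit $\S{X}$ when $\vec{X}$ has entries equal to $0$ or $1$. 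With these two repairs your proof is complete and correct.
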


\noindent
\proof \
\cite{Jain2015}, Prop.~3.13 and \cite{Jain2016a}, Prop.~A.2. \qed

\subsection{Multiple Alignments}

Let $\S{S}_n = \args{X_1, \ldots, X_n}$ be a sample of $n$ partitions $X_i \in \S{P}$. A multiple alignment of $\S{S}_n$ is an $n$-tuple $\mathfrak{X}= \args{\vec{X}_{1}, \ldots, \vec{X}_{n}}$
consisting of representations $\vec{X}_{i}\in X_{i}$. By
\[
\S{A}_n = \cbrace{\mathfrak{X} = \args{\vec{X}_{1}, \ldots, \vec{X}_{n}} \,:\, \vec{X}_{1} \in X_1, \ldots, \vec{X}_{n} \in X_n}
\]
we denote the set of all multiple alignments of $\S{S}_n$. A multiple alignment $\mathfrak{X}=\args{\vec{X}_{1}, \ldots, \vec{X}_{n}}$ is said to be in optimal position with representation $\vec{Z}$ of a partition $Z$, if all representations $\vec{X}_{i}$ of $\mathfrak{X}$ are in optimal position with $\vec{Z}$. The mean of a multiple alignment $\mathfrak{X} = \args{\vec{X}_{1}, \ldots, \vec{X}_{n}}$ is denoted by
\[
\vec{M}_{\mathfrak{X}} = \frac{1}{n} \sum_{i=1}^n \vec{X}_{i}.
\] 
An optimal multiple alignment is a multiple alignment that minimizes the function 
\[
f_n\!\args{\mathfrak{X}} = \frac{1}{n^2}\sum_{i=1}^n \sum_{j=1}^n \normS{\vec{X}_{i} - \vec{X}_{j}}{^2}.
\]
The problem of finding an optimal multiple alignment is that of finding a multiple alignment with smallest average pairwise squared distances in $\S{X}$. To show equivalence between mean partitions and an optimal multiple alignments, we introduce the sets of minimizers of the respective functions $F_n$ and $f_n$:
\begin{align*}
\S{M}(F_n) &= \cbrace{M\in \S{P} \,:\, F_n(M) \leq F_n(Z) \text{ for all } Z \in \S{P}}\\
\S{M}(f_n) &= \cbrace{\mathfrak{X} \in \S{A}_n \,:\, f_n(\mathfrak{X}) \leq f_n(\mathfrak{X}') \text{ for all } \mathfrak{X}' \in \S{A}_n}
\end{align*}
For a given sample $\S{S}_n$, the set $\S{M}(F_n)$ is the mean partition set and $\S{M}(f_n)$ is the set of all optimal multiple alignments. The next result shows that any solution of $F_n$ is also a solution of $f_n$ and vice versa.
\begin{theorem}\label{ theorem:equivalence:Fn-fn}
For any sample $\S{S}_n \in \S{P}^n$, the map
\[
\phi:\S{M}\!\args{f_n} \rightarrow \S{M}\!\args{F_n}, \quad \mathfrak{X} \mapsto \pi\!\args{\vec{M}_{\mathfrak{X}}}
\]
is surjective.
\end{theorem}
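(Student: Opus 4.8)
The plan is to reduce the whole statement to a single polarization identity that converts the pairwise objective $f_n$ into a Fr\'echet-type objective measured against the alignment mean. First I would record the elementary identity, valid for any multiple alignment $\mathfrak{X} = \args{\vec{X}_1, \ldots, \vec{X}_n}$ with mean $\vec{M}_{\mathfrak{X}}$,
\[
f_n\!\args{\mathfrak{X}} = \frac{2}{n} \sum_{i=1}^n \normS{\vec{X}_i - \vec{M}_{\mathfrak{X}}}{^2},
\]
which follows by expanding each $\normS{\vec{X}_i - \vec{X}_j}{^2}$ into inner products and using $\sum_i \vec{X}_i = n\,\vec{M}_{\mathfrak{X}}$. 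This is the standard ``sum of squared pairwise distances equals $2n$ times the variance about the centroid'' computation and is pure linear algebra in $\S{X}$.

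Next I would establish the uniform lower bound $f_n\!\args{\mathfrak{X}} \geq 2 F_n\!\args{\pi\!\args{\vec{M}_{\mathfrak{X}}}} \geq 2\min_{Z \in \S{P}} F_n(Z)$ for every $\mathfrak{X} \in \S{A}_n$. The first inequality is immediate from the identity together with the defining inequality $\normS{\vec{X}_i - \vec{M}_{\mathfrak{X}}}{} \geq \delta\!\args{X_i, \pi\!\args{\vec{M}_{\mathfrak{X}}}}$ of Eq.~\eqref{eq:delta<=norm}; the second holds because $F_n$ is minimized over $\S{P}$ by any mean partition. In particular $\min_{\mathfrak{X}'} f_n(\mathfrak{X}') \geq 2 \min_Z F_n(Z)$.

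For surjectivity I would start from an arbitrary mean partition $M \in \S{M}(F_n)$ and realize it as $\phi(\mathfrak{X})$ for an optimal $\mathfrak{X}$. Fix a representation $\vec{M} \in M$. Since a global minimum is in particular a local minimum, the Mean Partition Theorem yields representations $\vec{X}_i \in X_i$, each in optimal position with $\vec{M}$, such that $\vec{M} = \frac{1}{n}\sum_i \vec{X}_i$. Setting $\mathfrak{X} = \args{\vec{X}_1, \ldots, \vec{X}_n}$ gives $\vec{M}_{\mathfrak{X}} = \vec{M}$, hence $\phi(\mathfrak{X}) = \pi(\vec{M}) = M$. Because each $\vec{X}_i$ is in optimal position with $\vec{M}$, we have $\normS{\vec{X}_i - \vec{M}}{} = \delta\!\args{X_i, M}$, so the identity gives $f_n(\mathfrak{X}) = \frac{2}{n}\sum_i \delta\!\args{X_i, M}^2 = 2 F_n(M) = 2\min_Z F_n(Z)$. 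By the lower bound this value is the global minimum of $f_n$, so $\mathfrak{X} \in \S{M}(f_n)$. As $M$ was arbitrary, $\phi$ is surjective.

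I do not expect a serious obstacle: the argument is essentially the observation, forced by the polarization identity, that optimal multiple alignments and mean partitions share the common optimal value $2\min_Z F_n(Z)$. The only points requiring care are checking that the Mean Partition Theorem applies to global (not merely local) minimizers, which is automatic, and using the optimal-position equalities $\normS{\vec{X}_i - \vec{M}}{} = \delta(X_i, M)$ in the correct direction, so that the constructed $\mathfrak{X}$ \emph{attains} the lower bound rather than merely satisfying it with slack. The same identity incidentally shows that $\phi$ is well defined, since any $\mathfrak{X} \in \S{M}(f_n)$ then forces $F_n\!\args{\pi\!\args{\vec{M}_{\mathfrak{X}}}} = \min_Z F_n(Z)$.
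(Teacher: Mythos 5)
Your proof is correct, and it cannot really be compared step-by-step with the paper's, because the paper gives no argument at all for this statement: its entire proof is the citation ``\cite{Jain2016b}, Theorem 4.1''. Your proposal is therefore different in the most useful sense --- it is self-contained and uses only what this paper itself provides: the variance identity $f_n\!\args{\mathfrak{X}} = \frac{2}{n}\sum_{i=1}^n \normS{\vec{X}_i - \vec{M}_{\mathfrak{X}}}{^2}$ (pure linear algebra in the Frobenius inner product), the inequality $\delta\!\args{X_i, \pi\!\args{\vec{M}_{\mathfrak{X}}}} \leq \norm{\vec{X}_i - \vec{M}_{\mathfrak{X}}}$ from Eq.~\eqref{eq:delta<=norm}, the existence of mean partitions asserted in Section \ref{subsec:F} (via \cite{Jain2015c}), and the Mean Partition Theorem applied to a global --- hence local --- minimizer. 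Your ordering of the steps is also the right one: establish the uniform bound $f_n\!\args{\mathfrak{X}'} \geq 2\min_Z F_n(Z)$ first, then construct from an arbitrary mean partition $M$ an alignment $\mathfrak{X}$ attaining $f_n\!\args{\mathfrak{X}} = 2F_n(M) = 2\min_Z F_n(Z)$; this simultaneously pins down $\min f_n = 2\min F_n$, gives $\mathfrak{X} \in \S{M}\!\args{f_n}$ with $\phi(\mathfrak{X}) = M$, and (as you note) settles well-definedness of $\phi$, i.e.~that its values actually land in $\S{M}\!\args{F_n}$. Two small points worth making explicit in a written-out version: $\vec{M}_{\mathfrak{X}}$ lies in $\S{X}$ because $\S{X}$ is convex, so $\pi\!\args{\vec{M}_{\mathfrak{X}}}$ is a legitimate partition; and nonemptiness of $\S{M}\!\args{F_n}$ is what gets the attainment step off the ground. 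What your route buys is logical closure: Part 3 of the paper's proof of Theorem \ref{theorem:CJT} invokes exactly this equivalence, and with your argument the paper would no longer lean on an external preprint for it. What the bare citation buys the author is brevity, plus whatever finer structure of the correspondence beyond surjectivity the referenced theorem establishes.
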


\proof \cite{Jain2016b}, Theorem 4.1. \qed

\subsection{Proof of Theorem \ref{theorem:CJT}}

Parts 1--8 show the assertion of Eq.~\eqref{eq:condorcet:01} and Part 9 shows the assertion of Eq.~\eqref{eq:condorcet:02}.

\paragraph*{\textbf{1}}
Without loss of generality, we pick a representation $\vec{X}_{*}$ of the ground-truth partition $X_*$. Let $\vec{Z}$ be a representation of $Z$ in optimal position with $\vec{X}_{*}$. By 
\[
\S{A}_{\vec{Z}} = \cbrace{\vec{X} \in \S{X} \,:\, \norm{\vec{X}-\vec{Z}} \leq \alpha_Z/4}
\]
we denote the asymmetry ball of representation $\vec{Z}$. By construction, we have $\vec{X}_{*} \in \S{A}_{\vec{Z}}$.

\paragraph*{\textbf{2}}
Since $\Pi$ acts discontinuously on $\S{X}$, there is a bijective isometry 
\[
\phi:\S{A}_{\vec{Z}} \rightarrow \S{A}_Z, \quad \vec{X} \mapsto \pi(\vec{X})
\]
according to \cite{Ratcliffe2006}, Theorem 13.1.1. 

\paragraph*{\textbf{3}}
From \cite{Jain2016a}, Theorem 3.1 follows that the mean partition $M$ of $\S{S}_n$ is unique. We show that $M \in \S{A}_Z$. Suppose that $\mathfrak{X} = \args{\vec{X}_{1}, \ldots, \vec{X}_{n}}$ is a multiple alignment in optimal position with $\vec{Z}$. Since $\phi:\S{A}_{\vec{Z}} \rightarrow \S{A}_Z$ is a bijective isometry, we have
\[
f_n\!\args{\mathfrak{X}} = \frac{1}{n^2}\sum_{i=1}^n \sum_{j=1}^n \normS{\vec{X}_{i} - \vec{X}_{j}}{^2} = \frac{1}{n^2}\sum_{i=1}^n \sum_{j=1}^n \delta\!\argsS{X_i, X_j}{^2} 
\]
showing that the multiple alignment $\mathfrak{X}$ is optimal. From Theorem \ref{ theorem:equivalence:Fn-fn} follows that 
\[
\vec{M} = \vec{M}_{\mathfrak{X}} = \frac{1}{n} \sum_{i=1}^n \vec{X}_{i}
\]
is a representation of a mean partition $M$ of $\S{S}_n$. Since $\S{A}_{\vec{Z}}$ is convex, we find that $\vec{M} \in \S{A}_{\vec{Z}}$ and therefore $M \in \S{A}_Z$.

\paragraph*{\textbf{4}}
From Part 1--3 of this proof follows that the multiple alignment $\mathfrak{X}$ is in optimal position with $\vec{X}_{*}$. We show that there is no other multiple alignment of $\S{S}_n$ with this property. Observe that $\S{A}_{\vec{Z}}$ is contained in the Dirichlet fundamental domain $\S{D}_{\vec{Z}}$ of representation $\vec{Z}$. Let $\S{S}_{\vec{Z}} = \phi(\S{S}_Q)$ be a representation of the support in $ \S{A}_{\vec{Z}}^\circ$. Then by assumption, we have $\S{S}_{\vec{Z}} \subseteq \S{A}_{\vec{Z}}^\circ \subset \S{D}_{\vec{Z}}$ showing that $\S{S}_{\vec{Z}}$ lies in the interior of $\S{D}_{\vec{Z}}$. From the definition of a fundamental domain together with Lemma \ref{lemma:unique-interior} follows that $\mathfrak{X}$ is the unique optimal alignment in optimal position with $\vec{X}_{*}$. 

\paragraph*{\textbf{5}}
With the same argumentation as in the previous part of this proof, we find that $\vec{M}$ is the unique representation of $M$ in optimal position with $\vec{X}_{*}$.

\paragraph*{\textbf{6}}
Let $z \in \S{Z}$ be a data point. Since $\vec{X}_{i} \in X_i$ is the unique representation in optimal position with $\vec{X}_{*}$, the vote of $X_i$ on data point $z$ is of the form $V_{X_i}(z) = V_{\vec{X}_{i}}(z)$ for all $i \in \cbrace{1, \ldots, n}$. With the same argument, we have $V_n(z) = V_{M}(z) = V_{\vec{M}}(z)$.

\paragraph*{\textbf{7}}
By $\vec{x}^{(i)}(z)$ we denote the column of $\vec{X}_{i}$ that represents $z$. By definition, we have 
\[
p_z = \mathbb{P}\args{V_{X_i}(z) = 1} = \mathbb{P}\args{\inner{\vec{x}^{(i)}(z), \vec{x}^*(z)} > 0.5}
\]
for all $i \in \cbrace{1, \ldots, n}$. Since $X_i$ and $X_*$ are both hard partitions, we find that
\[
\inner{\vec{x}^{(i)}(z), \vec{x}^*(z)} = \mathbb{I}\cbrace{\vec{x}^{(i)}(z) = \vec{x}^*(z)},
\] 
where $\mathbb{I}$ denotes the indicator function. 

\paragraph*{\textbf{8}}
From the Mean Partition Theorem follows that 
\[
\vec{m}(z) = \frac{1}{n} \sum_{i=1}^n \vec{x}^{(i)}(z)
\]
is the column of $\vec{M}$ that represents $z$. Then the agreement of $\vec{M}$ on $z$ is given by
\begin{align*}
k_{\vec{M}}(z) 
&= \inner{\vec{m}(z), \vec{x}^*(z)}\\ 
&= \frac{1}{n}\sum_{i=1}^n \inner{\vec{x}^{(i)}(z), \vec{x}^*(z)}\\
&= \frac{1}{n}\sum_{i=1}^n \mathbb{I}\cbrace{\vec{x}^{(i)}(z) = \vec{x}^*(z)}.
\end{align*}
Thus, the agreement $k_{\vec{M}}(z)$ counts the fraction of sample partitions $X_i$ that correctly classify $z$. Let
\[
p_n = \mathbb{P}\args{h_n(z) = 1} = \mathbb{P}\args{k_{\vec{M}}(z) > 0.5}
\]
denote the probability that the majority of the sample partitions $X_i$ correctly classifies $z$. Since the votes of the sample partitions are assumed to be independent, we can compute $p_n$ using the binomial distribution
\[
p_n = \sum_{i=r}^n \binom{n}{i} p^i (1-p)^{n-i},
\]
where $r = \lfloor n/2 \rfloor + 1$ and $\lfloor a \rfloor$ is the largest integer $b$ with $b \leq a$. Then the assertion of Eq.~\eqref{eq:condorcet:01} follows from \cite{Grofman1983}, Theorem 1. 

\paragraph*{\textbf{9}}
We show the assertion of Eq.~\eqref{eq:condorcet:02}. By assumption, the support $\S{S}_Q$ is contained in an open subset of the asymmetry ball $\S{A}_Z$. From \cite{Jain2016a}, Theorem 3.1 follows that the expected partition $M_Q$ of $Q$ is unique. Then the sequence $(M_n)_{n \in \N}$ converges almost surely to the expected partition $M_Q$ according to \cite{Jain2015c}, Theorem 3.1 and Theorem 3.3. From the first eight parts of the proof follows that the limit partition $M_Q$ agrees on any data point $z$ almost surely with the ground-truth partition $X_*$. This shows the assertion.

\end{appendix}

\section*{References}

\end{document}